\def\p(#1|#2){p(#1\,|\,#2)}
\definecolor{mycitecolor}{RGB}{71, 191, 38}
\definecolor{mylinkcolor}{RGB}{40, 115, 201}
\newcommand{\blkdiag}{\mathop{\rm blkdiag}}
\newcommand{\norm}[1]{\left\lVert#1\right\rVert}
\newcommand{\mnorm}[1]{{\left\vert\kern-0.25ex\left\vert\kern-0.25ex\left\vert #1 
    \right\vert\kern-0.25ex\right\vert\kern-0.25ex\right\vert}}
\newtheorem{definition}{Definition} 
\newtheorem{theorem}{Theorem}
\newtheorem{lemma}{Lemma}
\newcommand{\maulik}[1]{{\textcolor[rgb]{0,0,0}{#1}}}
\title{\LARGE \bf
MultiNash-PF: A Particle Filtering Approach for Computing Multiple Local Generalized Nash Equilibria in Trajectory Games
}
\author{ Maulik Bhatt$^{1}$ \and
Iman Askari$^{2}$ \and
Yue Yu$^{3}$ \and
Ufuk Topcu$^{4}$ \and
Huazhen Fang$^{2}$ \and
Negar Mehr$^{1}$
\thanks{$^{1}$Maulik Bhatt and Negar Mehr are with University of California, Berkeley, CA, 94720, USA {\tt\small\{maulikbhatt,negar\}@berkeley.edu}}%
\thanks{$^{2}$Iman Askari and Huazhen Fang are with University of Kansas, Lawrence, KS, 66045, USA
        {\tt\small \{askari,fang\}@ku.edu}}%
\thanks{$^{3}$ Yue Yu is with University of Minnesota, Minneapolis, 55455, USA
        {\tt\small yuey@umn.edu}}        
\thanks{$^{3}$ Ufuk Topcu is with University of Texas at Austin, TX, 78712, USA
        {\tt\small utopcu@utexas.edu}}
\thanks{\maulik{This research involving human subjects was approved by the Committee for Protection of Human Subjects under Protocol No. 2024-11-18012.}}
\thanks{\maulik{This work was supported by the National Science Foundation, under grants ECCS-2438314 CAREER Award, CNS-2423130, and CCF-2423131.}}
}
\begin{document}

\maketitle
\thispagestyle{empty}
\pagestyle{empty}

\begin{abstract}

Modern robotic systems frequently engage in complex multi-agent interactions, many of which are inherently multi-modal, i.e., they can lead to multiple distinct outcomes. To interact effectively, robots must recognize the possible interaction modes and adapt to the one preferred by other agents. In this work, we propose MultiNash-PF, an efficient algorithm for capturing the multimodality in multi-agent interactions. We model interaction outcomes as equilibria of a game-theoretic planner, where each equilibrium corresponds to a distinct interaction mode.
Our framework formulates interactive planning as Constrained Potential Trajectory Games (CPTGs), in which local Generalized Nash Equilibria (GNEs) represent plausible interaction outcomes. We propose to integrate the potential game approach with implicit particle filtering, a sample-efficient method for non-convex trajectory optimization. We utilize implicit particle filtering to identify the coarse estimates of multiple local minimizers of the game's potential function. MultiNash-PF then refines these estimates with optimization solvers, obtaining different local GNEs. We show through numerical simulations that MultiNash-PF reduces computation time by up to 50\% compared to a baseline. We further demonstrate the effectiveness of our algorithm in real-world human-robot interaction scenarios, where it successfully accounts for the multi-modal nature of interactions and resolves potential conflicts in real-time.

\end{abstract}

\section{Introduction}\label{sec:intro}

Many real-world robotic interactions are inherently multi-modal. For example, in the two-player interaction illustrated in Fig.~\ref{fig:experiment}, the agents need to move towards their goals while avoiding collisions. Here, two outcomes exist: both agents yielding to their right or left. If the two agents act independently, they might choose conflicting actions. For example, one yields to the left and the other to the right, which can potentially result in a collision. What trajectory each agent would prefer to pick depends on the \emph{convention} they might be following. For example, in some cultures, in such settings, the convention is to yield to the right to avoid collisions, while in other cultures, the convention is to yield to the left. Such conventions typically emerge from repeated coordination problems~\cite{alterman2001convention}, and autonomous agents must recognize them and account for the multi-modality of interactions to coordinate with humans effectively. Some recent works have shown that reasoning about the multi-modality of interactions is critical to aligning agents' preferences over different outcomes~\cite{peters2020inference} or helping coordinate agents' actions via recommendation~\cite{im2024coordination}. In this paper, we address this challenge by proposing MultiNash-PF, an algorithm that efficiently identifies multiple likely interaction outcomes in multi-agent domains and enables agents to adapt their plans accordingly.

\begin{figure}
    \centering
    \includegraphics[width=\linewidth]{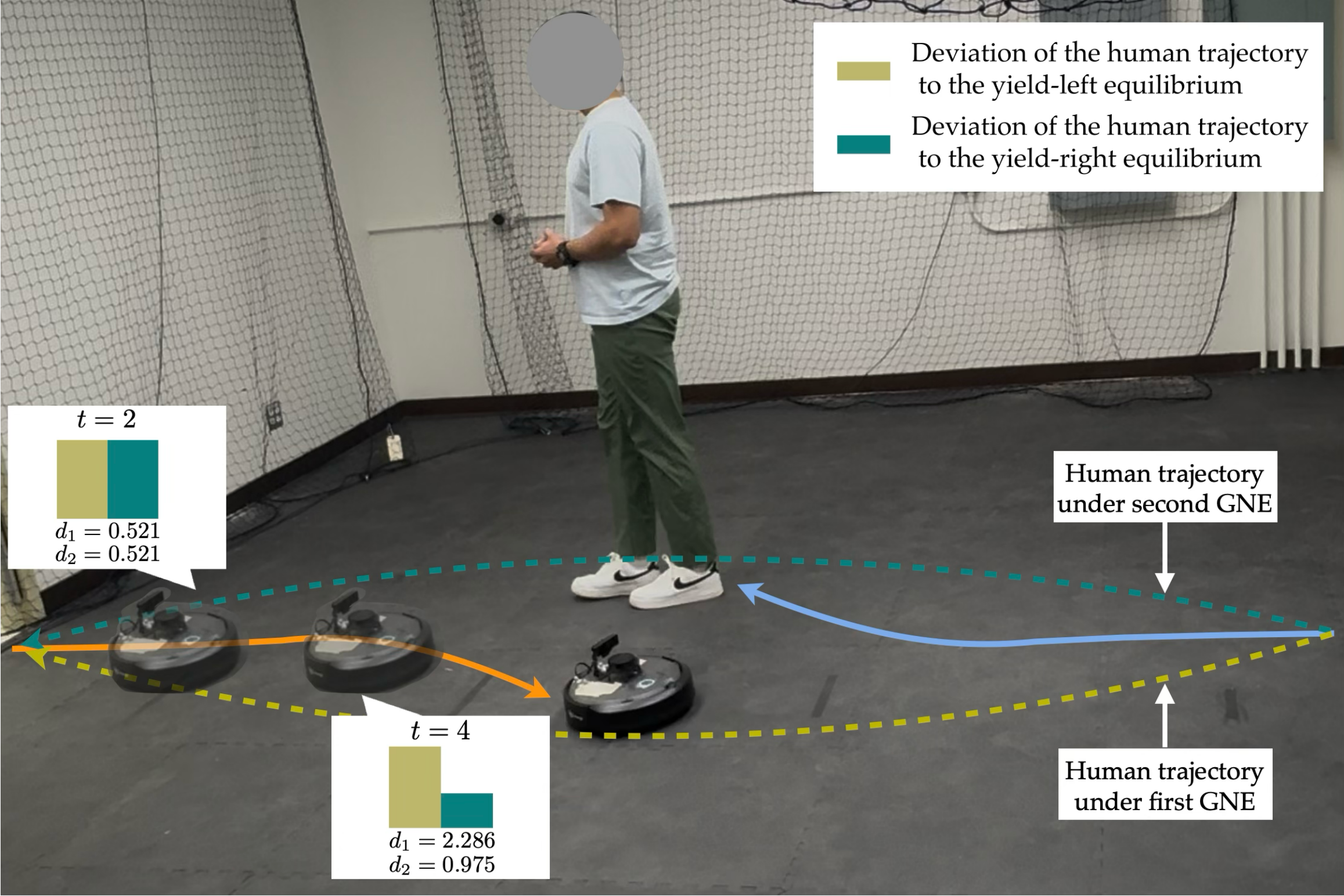}
    \caption{Snapshot of an interaction between a robot and a human when they exchange positions. Using MultiNash-PF, the robot can compute both modes of interaction. At $t = 2$, the robot is uncertain about the preferred mode of the human since the human trajectory's distance from both equilibria is almost the same. At $t = 4$, the human trajectory is closer to the yield-right GNE; therefore, the robot chooses the corresponding GNE and avoids collision with the human.}
    \label{fig:experiment}
\end{figure}

We formulate the problem of computing multi-agent interaction outcomes as a multi-agent trajectory planning problem. To formalize interactions, we use dynamic game theory~\cite{bacsar1998dynamic}, which has been shown to be powerful in developing interactive and socially compliant robots \cite{galati2022game, fridovich2020efficient, sadigh2016planning, bhatt2023strategic} where it has been shown that the interaction outcomes are captured through \emph{generalized Nash equilibrium} (GNE). When multiple agents interact, a GNE corresponds to a joint action such that no agent has incentives to deviate from its equilibrium actions unilaterally. Several recent works have shown great success in utilizing GNE for modeling interaction outcomes in interactive domains~\cite{dreves2018generalized,wang2020game, bhatt2023efficient, jia2023rapid}. However, there typically exist multiple equilibria in interaction games, many of which are equally valid. We argue that each equilibrium corresponds to a distinct \emph{mode} of interaction. For instance, in Fig.~\ref{fig:experiment}, yielding to the right or left are both acceptable equilibria, representing different Nash equilibria. Consequently, we propose that in order to capture different interaction modes, robots must \emph{reason about multiple interaction equilibria and select the one that aligns with the convention adopted by other agents}. However, this is a non-trivial task as it requires the robot to find all local GNEs. To our best knowledge, there is no principled approach to compute multiple local Nash equilibria in constrained trajectory games other than exhaustive random initializations of the game solvers.


In this paper, we tackle this challenge by introducing {MultiNash-PF}, a novel framework designed to efficiently compute multiple local GNEs in constrained trajectory games. This, in turn, enables robots to recognize and reason about multiple interaction modes. Our approach integrates 1) constrained potential games~\cite{zazo2016dynamic}, a class of games for which local GNEs can be computed by solving a single constrained optimal control problem, and 2) constraint-aware implicit particle filtering which is a sample-efficient method recently applied to nonconvex trajectory optimization~\cite{askari2023model}, to develop a principled approach for identifying various interaction outcomes. 

Using CPTGs, we can compute a local GNE by minimizing the potential function of the game, which is equivalent to solving a single constrained optimal control problem. We can solve the resulting optimal control problem using off-the-shelf optimization solvers. Then, we use a constraint-aware implicit particle filtering method to identify multiple local minima of the potential minimization problem. This method eliminates the need for random initialization of the optimization solver by efficiently obtaining coarse estimates of different local GNEs. Next, we use the identified minima as initialization for an optimization solver to refine our solution. The solver then computes all the different local GNEs of the original game, which correspond to multiple modes of interaction among the agents.
Leveraging the sample efficiency of the implicit particle filtering method, MultiNash-PF can explore and identify multiple local GNEs faster than random initialization of optimization solvers. We further show the real-time application of MultiNash-PF in a human-robot navigation setting where the robot can reason about the existence of different local GNEs and plan its motion autonomously based on the modality the human prefers.


\vspace{-0.2cm}
\section{Related Works}

\noindent \textbf{Game Theoretic Planning.}
Game-theoretic planning has been widely used for modeling interactive domains. One of the foundational works in game theoretic motion planning was \cite{lavalle2000robot}. A wide range of methods have been proposed to compute different types of equilibria such as Stackelberg equilibria~\cite{liniger2019noncooperative,sadigh2016planning}, Nash equilibria, and generalized Nash equilibria through techniques such as best response dynamics~\cite{spica2020real,wang2019game}, terative linear-quadratic approximations~\cite{fridovich2020efficient,wang2020game,mehr2023maximum}, augmented Lagrangian-based solvers~\cite{le2022algames}, and potential games-based methods~ \cite{zazo2016dynamic,kavuncu2021potential,bhatt2023efficient,jia2023rapid,williams2023distributed,bhatt2025strategic}. However, the shortcomings of all these methods lie in the fact that they find only one Nash equilibrium of the underlying game and ignore the multi-modal nature of the problem that is crucial for coordination among agents. In \cite{peters2020inference}, the authors recognize the existence of multiple equilibria and focus on inferring the equilibrium of other agents in the environment. However, their methodology relies on computing different approximate local Nash equilibria using rollouts of randomly selected initial strategies, which we show in this work performs poorly compared to our method. 

\noindent \textbf{Multiple Local Solutions to a Constrained Optimal Control Problem.}
Optimal control problems that are relevant to robotics are often nonconvex constrained optimization problems, which are solved using gradient-based methods such as interior-point algorithms~\cite{Wachter:2006wt} and sequential quadratic programming~\cite{boggs1995sequential}. These methods, however, are limited to finding a single local solution per initialization.  In the context of identifying all local GNEs, this limitation results in a significant computational burden as numerous initial guesses are required to explore the solution space. To address this, we depart from traditional approaches and adopt a Bayesian inference framework that enables simultaneous recovery of multiple local GNEs. Recent work~\cite{Askari:TRO:2025, askari2021nonlinear, askari2022sampling} has demonstrated that constrained optimization problems can be reformulated as Bayesian inference tasks, enabling estimation algorithms to overcome the limitations of gradient-based solvers. In this setting, particle filtering methods~\cite{doucet2001sequential,sarkka2023bayesian} provide a principled approach to capturing multimodal distributions. We leverage this by employing the implicit particle filter, which improves sample efficiency by focusing particles in high-probability regions of the posterior~\cite{Askari:2022:Auto}.
\section{Constrained Potential Trajectory Games}
\label{sec: traj games}


Let $\mathcal{N}:= \{1,2,\ldots,N\}$ denote the set of agents' indices. 
We assume that the game is played over a finite time horizon of $\tau\in\mathbb{N}$ time-steps, where $\mathbb{N}$ is the set of natural numbers. 
Let \(x^i_t\in\mathbb{R}^{n_i}\) and \(u_t^i\in\mathbb{R}^{m_i}\) denote the state and control input of the \(i\)-th agent at time $t$, respectively. We assume that the state of the \(i\)-th agent evolves as follows:
\vspace{-0.1cm}
\begin{equation}\label{eq:dynamics}
    x_{t+1}^i=f^i(x_t^i, u_t^i),
\end{equation}
\vspace{-0.5cm}

\noindent for all \(t \leq \tau \), where \(f^i:\mathbb{R}^{n_i}\times \mathbb{R}^{m_i}\to\mathbb{R}^{n_i}\) is a continuously differentiable function characterizing the dynamics of the \(i\)-th agent. 
Let $n=\sum_{i=1}^N n_i, m = \sum_{i=1}^N m_i$. We denote the joint state, input, and dynamics of all agents as $x_t\coloneqq \begin{bmatrix}
        (x_t^1)^\top & (x_t^2)^\top &
        \cdots & (x_t^N)^\top 
    \end{bmatrix}^\top$, $u_t\coloneqq \begin{bmatrix}
        (u_t^1)^\top & (u_t^2)^\top &
         \cdots&
        (u_t^N)^\top 
    \end{bmatrix}^\top$, and $f\coloneqq \begin{bmatrix}
        (f^1)^\top & (f^2)^\top &
         \cdots&
        (f^N)^\top 
    \end{bmatrix}^\top$ respectively.
Note that each $f^i$ is a vector-valued dynamics function for each agent. Therefore, $f:\mathbb{R}^n\times\mathbb{R}^m\to \mathbb{R}^n$ will be a vector-valued dynamics function for the joint system.

We assume that each agent has to satisfy some constraints that couple different agents' states and control inputs. Let $g:\mathbb{R}^n\times \mathbb{R}^m\to\mathbb{R}^c$ with \(c\in\mathbb{N}\) denote a continuously differentiable function that defines the constraints on the joint state \(x_t\) and joint control input \(u_t\) at each time:
\vspace{-0.2cm}
\begin{equation}\label{eq:constraints}
    g(x_t, u_t)\leq {0}_c,
\end{equation}
\vspace{-0.6cm}

\noindent where ${0}_c$ is a vector of zeros in $\mathbb{R}^c$. The aim of the constraint function $g$ is to capture individual agents' strict preferences, such as each agent avoiding obstacles or avoiding collision with other agents. It should be noted that coupling amongst agents' decisions is captured through constraints.



We assume that each agent \(i\) aims to minimize an objective function. 
Let \(Q^i\) and \( Q_{\tau}^i\) be positive semidefinite and \(R^i\) be positive definite matrices for all \(i\in\mathcal{N}\). We assume that each agent \(i\) seeks to minimize the following objective:
\vspace{-0.1cm}
\begin{equation}\label{eqn: traj cost}
     \norm{x_\tau^i-\hat{x}_\tau^i}_{Q^i_\tau}^2+\sum_{t=0}^{\tau-1} \norm{x_t^i-\hat{x}_t^i}_{Q^i}^2 + \sum_{t=0}^\tau\norm{u_t^i}_{R^i}^2 
\end{equation}
\vspace{-0.3cm}

\noindent where \(\hat{x}_t^i\in\mathbb{R}^{n_i}\) is the reference state for agent \(i\) at time \(t\). Typically, the reference states for each agent is a path of minimum cost they would follow in the absence of other agents in the environment, e.g., a straight line connecting the start and goal location. \maulik{Such a quadratic cost structure is a standard assumption in various motion planning domains \cite{askari2025model}.} Furthermore, we denote the joint reference state of all agents at time $t$ as $\hat{x}_t$.  
We denote by $\blkdiag(A_1,\ldots,A_n)$ the block-diagonal matrix formed by placing matrices $A_1,\ldots,A_n$ on the diagonal with zeros elsewhere.
Let $a_{0:\tau}:= \{a_0,\ldots,a_\tau\}$ denote the collection of vectors $a_t$'s over the entire horizon $0 \leq t \leq \tau$. We use \(\{{x}_{0:\tau}^i,{u}_{0:\tau}^i\}_{i=1}^N\) to denote the joint trajectory of all \(N\) the agents at all times. Note that using the joint notation, we can equivalently write the joint trajectories of all agents as \(\{{x}_{0:\tau},{u}_{0:\tau}\}\).

\subsection{Local Generalized Nash Equilibrium Trajectories} 

Due to the interactive nature of the problem, generally, it is not possible for all agents to minimize their costs simultaneously while satisfying the constraints. Therefore, the interaction outcome is best captured by \emph{local GNE trajectories}. With dynamics as \eqref{eq:dynamics} and constraints as \eqref{eq:constraints}, we denote the game by 
\vspace{-0.1cm}
\begin{equation*}
    \mathcal{G} := \left( \mathcal{N}, \{Q^i\}_{i\in\mathcal{N}}, \{Q^i_\tau\}_{i\in\mathcal{N}},\{R^i\}_{i\in\mathcal{N}} , g,f, \hat{x}_{0:\tau} \right).
\end{equation*}
\vspace{-0.5cm}

\noindent For a given $\epsilon$, we define the set of local trajectories around a joint trajectory as
$\mathcal{D}\left( \{{x}_{0:\tau},{u}_{0:\tau}\}, \epsilon \right) =
    \left\{ \{{x}'_{0:\tau},{u}'_{0:\tau}\}  \bigg| \sum_{t=0}^\tau \left(\norm{x_t-{x}'_t}^2_2+ \norm{u_t-{u}'_t}^2_2\right)\leq \epsilon  \right\}.
$
A local GNE trajectory for $\mathcal{G}$ is then defined as follows.
\begin{definition}\label{def: Nash}
    A set of joint trajectories \(\{{x}_{0:\tau}^{i^\star},{u}_{0:\tau}^{i^\star}\}_{i=1}^N\) is a local GNE trajectory for $\mathcal{G}$ if for every agent \(i\in\mathcal{N}\), the agent's trajectory \(\{{x}_{0:\tau}^{i^\star},{u}_{0:\tau}^{i^\star}\}\) is an optimal solution of the following optimization problem within a local neighborhood $\mathcal{D}\left( \{{x}^\star_{0:\tau},{u}^\star_{0:\tau}\}, \epsilon \right)$ for some $\epsilon > 0$:
\begin{align}\label{opt: local Nash}
      \min_{\{x_{0:\tau}^i, u_{0:\tau}^i\}} & \norm{x_\tau^i-\hat{x}_\tau^i}_{Q^i_\tau}^2+\sum_{t=0}^{\tau-1} \norm{x_t^i-\hat{x}_t^i}_{Q^i}^2 + \sum_{t=0}^\tau\norm{u_t^i}_{R^i}^2 \nonumber\\ \nonumber
     \mathrm{s.t} \quad & x^i_0=\hat{x}^i_0, \enskip x^i_{t+1} = f^i(x^i_t,u^i_t),\enskip 0\leq t\leq  \tau-1, \\ 
    & g(\tilde{x}_t,\tilde{u}_t) \leq {0}_c, \enskip 0\leq t\leq \tau, 
\end{align}
where we use the following notation in \(g(\tilde{x}_t,\tilde{u}_t)\)
\begin{equation*}
    \begin{aligned}
         \tilde{x}_t &\coloneqq \begin{bsmallmatrix}
        ({x}_t^{1^\star})^\top &  ({x}_t^{2^\star})^\top & 
        \cdots & ({x}_t^{{i-1}^\star})^\top & (x_t^i)^\top & ({x}_t^{{i+1}^\star})^\top & \cdots & ({x}_t^{N^\star})^\top 
    \end{bsmallmatrix}^\top ,\\
    \tilde{u}_t & \coloneqq \begin{bsmallmatrix}
        ({u}_t^{1^\star})^\top &  ({u}_t^{2^\star})^\top & 
        \cdots & ({u}_t^{{i-1}^\star})^\top & (u_t^i)^\top & ({u}_t^{{i+1}^\star})^\top & \cdots & ({u}_t^{N^\star})^\top 
    \end{bsmallmatrix}^\top.
    \end{aligned}
\end{equation*}
\end{definition}

Intuitively, at a local GNE, no agent can reduce their cost function by independently changing their trajectory to any alternative feasible trajectory within the local neighborhood of the equilibrium trajectories. It is important to highlight that solving~\eqref{opt: local Nash} requires solving a set of $N$ coupled constrained optimal control problems, which is often computationally expensive. Therefore, finding all the local GNEs of the game can be time-consuming when using traditional game-theoretic methods that aim to solve coupled optimal control problems.

However, an important property of our trajectory game is that it is a potential game, i.e., a GNE of the original game can be computed by optimizing a single constrained optimal control problem called \emph{Potential}, as described in \cite{bhatt2023efficient}.  This reduces the computational cost of computing the GNEs of the game. In the following, we will prove that game $\mathcal{G}$ can be expressed as a constrained trajectory potential game and that any local minimizer of the potential function is a local GNE.

\begin{theorem}\label{prop: equivalence}

A trajectory \(\{{x}_{0:\tau}^{i^\star},{u}_{0:\tau}^{i^\star}\}_{i=1}^N\) is a local GNE if within a local neighborhood $\mathcal{D}\left( \{{x}^\star_{0:\tau},{u}^\star_{0:\tau}\}, \epsilon \right)$, it is an optimal solution of
\begin{align}\label{opt: traj local}
      \min_{\{x_{0:\tau},u_{0:\tau}\}} & \norm{x_\tau-\hat{x}_\tau}_{Q_\tau}^2+\sum_{t=0}^{\tau-1} \norm{x_t-\hat{x}_t}_{Q}^2 + \sum_{t=0}^\tau\norm{u_t}_{R}^2 \nonumber\\ \nonumber
     \mathrm{s.t} \quad & x_0=\hat{x}_0, \enskip x_{t+1} = f(x_t,u_t),\enskip 0\leq t\leq  \tau-1, \\
    & g(x_t,u_t) \leq {0}_c, \enskip 0\leq t\leq \tau,
\end{align}
for some \(\epsilon>0\), where $Q_\tau \coloneqq \blkdiag(Q^1_\tau,\ldots,Q^N_\tau),\enskip  Q \coloneqq \blkdiag(Q^1,\ldots,Q^N)$ and $R \coloneqq \blkdiag(R^1,\ldots,R^N)$. 
\end{theorem}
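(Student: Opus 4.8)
The plan is to show that the separable (block-diagonal) structure of the potential optimization problem~\eqref{opt: traj local} decouples exactly into the $N$ per-agent problems~\eqref{opt: local Nash} once the other agents' trajectories are held fixed, and to exploit this to transfer local optimality from the potential to each agent's subproblem. First I would observe that, because $Q_\tau$, $Q$, and $R$ are block-diagonal and the dynamics $f$ and initial condition $\hat x_0$ stack the per-agent dynamics $f^i$ and per-agent initial conditions $\hat x_0^i$ without coupling, the objective of~\eqref{opt: traj local} splits as $\sum_{i=1}^N J^i(\{x_{0:\tau}^i,u_{0:\tau}^i\})$ where $J^i$ is precisely the cost in~\eqref{opt: local Nash}, and the dynamic-feasibility and initial-condition constraints of~\eqref{opt: traj local} are equivalent to the conjunction over $i$ of the per-agent dynamic constraints in~\eqref{opt: local Nash}. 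The only genuinely coupled ingredient is the shared constraint $g(x_t,u_t)\le 0_c$, which is identical in both formulations (in~\eqref{opt: local Nash} it is written as $g(\tilde x_t,\tilde u_t)\le 0_c$ with the other agents' components frozen at their starred values).

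Next I would argue by contradiction. Let $\{x_{0:\tau}^\star,u_{0:\tau}^\star\}$ be a local minimizer of~\eqref{opt: traj local} over $\mathcal{D}(\{x^\star_{0:\tau},u^\star_{0:\tau}\},\epsilon)$, and suppose it were not a local GNE. Then there is some agent $i$ and a feasible trajectory $\{x_{0:\tau}'^{i},u_{0:\tau}'^{i}\}$ for~\eqref{opt: local Nash} lying in the corresponding per-agent neighborhood and strictly improving $J^i$. I would then form the joint trajectory obtained by replacing agent $i$'s block with this improved trajectory while keeping every other agent's block at the starred value. The key checks are: (i) this joint trajectory is feasible for~\eqref{opt: traj local} — the other blocks still satisfy their dynamics and initial conditions, agent $i$'s new block does too because it is feasible for~\eqref{opt: local Nash}, and the coupled constraint $g$ is exactly the constraint satisfied by the perturbed $\tilde x_t,\tilde u_t$; (ii) it still lies inside $\mathcal{D}(\{x^\star_{0:\tau},u^\star_{0:\tau}\},\epsilon)$ — since only agent $i$'s block changed, the squared-distance sum reduces to agent $i$'s contribution, which is controlled by choosing the per-agent neighborhood radius in Definition~\ref{def: Nash} compatibly with $\epsilon$; and (iii) the joint objective strictly decreases — because the other agents' contributions are unchanged and $J^i$ strictly decreased. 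This contradicts local optimality of $\{x^\star_{0:\tau},u^\star_{0:\tau}\}$ for~\eqref{opt: traj local}.

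I would organize the write-up as: (1) state the separability identity for the objective and constraints; (2) note that the coupled constraint in the two formulations coincides; (3) run the contradiction argument above, being explicit about how the neighborhood radii relate. The main obstacle — and the step deserving the most care — is the bookkeeping around the two notions of ``local neighborhood'': the single $\epsilon$-ball $\mathcal{D}$ in the joint space versus the per-agent $\epsilon$-balls implicit in Definition~\ref{def: Nash}. One must be careful that the perturbation used in the contradiction genuinely stays inside the joint ball (which it does, since freezing $N-1$ blocks makes the joint distance equal the single-agent distance), and, if one also wanted the converse direction, that a per-agent improving move can be realized without leaving $\mathcal{D}$. Everything else — the block-diagonal algebra, the stacking of dynamics — is routine and can be stated without detailed calculation.
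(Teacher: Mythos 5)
Your proposal is correct, and it is essentially a self-contained re-derivation of the result that the paper instead obtains by citation. Both you and the paper rely on the same structural observation: the block-diagonal weights and the stacked, uncoupled dynamics make the joint objective of~\eqref{opt: traj local} split as the sum of the per-agent costs in~\eqref{opt: local Nash}, so that sum is an exact potential for $\mathcal{G}$. Where the paper stops there and invokes Theorem~2 of the cited potential-games reference to conclude that any local minimizer of the potential is a local GNE, you supply the missing implication directly: the unilateral-deviation contradiction argument, in which a strictly improving feasible deviation by one agent is spliced into the joint trajectory to produce a feasible point of~\eqref{opt: traj local} with strictly lower potential inside the same neighborhood. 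Your handling of the neighborhoods is also sound --- in fact it is simpler than you feared, since Definition~\ref{def: Nash} already measures each agent's admissible deviations in the \emph{joint} ball $\mathcal{D}\left(\{x^\star_{0:\tau},u^\star_{0:\tau}\},\epsilon\right)$, and freezing the other $N-1$ blocks makes the joint squared distance collapse to agent $i$'s contribution, so the same $\epsilon$ works on both sides. What your route buys is transparency and independence from the external reference (it makes explicit that only the identity of the coupled constraint $g$ across the two formulations and the separability of the cost are used); what the paper's route buys is brevity and the ability to inherit whatever generality the cited theorem provides beyond this quadratic, individually-separable cost structure.
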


\begin{proof}
This theorem follows from \cite[Thm. 2]{bhatt2023efficient}.
Since agent costs from \eqref{eqn: traj cost} depend only on the individual agent's own states and control inputs, we use the results from \cite[Thm. 2]{bhatt2023efficient} to obtain the potential function of $\mathcal{G}$ to be
\vspace{-0.1cm}
\begin{equation}\label{eq: agent sum}
    \sum_{i=1}^{N}\left(\norm{x_\tau^i-\hat{x}_\tau^i}_{Q^i_\tau}^2 + \sum_{t=0}^{\tau-1} \norm{x_t^i-\hat{x}_t^i}_{Q^i}^2 + \sum_{t=0}^\tau\norm{u_t^i}_{R^i}^2\right).
\end{equation}
Using the joint notations, we can re-write the sums in \eqref{eq: agent sum} as \eqref{opt: traj local}.
Therefore, we have proven that $\mathcal{G}$ is a CPTG with potential function \eqref{opt: traj local}. Hence, using Theorem 2 from \cite{bhatt2023efficient}, any local solution of \eqref{opt: traj local} will also be a local GNE of $\mathcal{G}$.
\end{proof}
Using Theorem~\ref{prop: equivalence}, we want to find all local solutions of \eqref{opt: traj local} that will also be local GNE trajectories as they correspond to different interaction modes and outcomes. We would like to acknowledge that finding local solutions of \eqref{opt: traj local} will not necessarily give us all the GNEs of the original, as Theorem~\ref{prop: equivalence} provides only a sufficient condition for computing local GNEs. Furthermore, it is known that the problem of finding all Nash equilibria is a PPAD-complete problem even in the case of static games~\cite{daskalakis2009complexity}. However, for practical purposes in multi-agent navigation, finding local solutions of \eqref{opt: traj local} has proven to be enough for us.

The conventional approach to solving~\eqref{opt: traj local} relies on techniques such as interior point methods \cite{Wachter:2006wt} or sequential quadratic programming~\cite{boggs1995sequential}.
Naively, the local solutions can be obtained by exhaustive initialization of random trajectories to an optimization solver, such as IPOPT~\cite{Wachter:2006wt}, hoping to explore the solution space of local Nash equilibria. However, randomly exploring the nonconvex optimization landscape via gradient-based methods is computationally prohibitive and inefficient, as the same local equilibria can be arrived at from multiple random initializations. To address this, we utilize a Bayesian inference approach in the following section to efficiently recover all identified minima of~\eqref{opt: traj local}.
\begin{figure}
    \centering
    \includegraphics[width=0.8\linewidth,trim={0 0.8cm 0 1cm},clip]{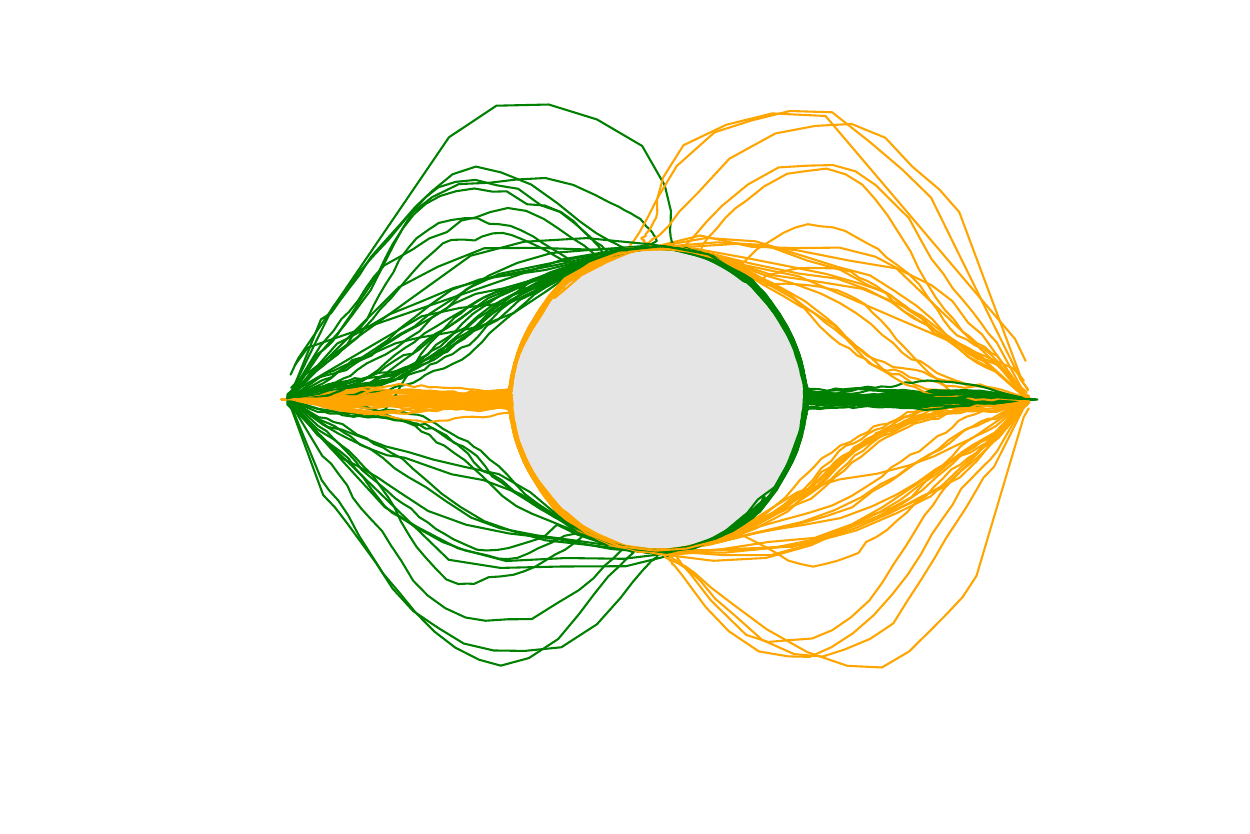}
    \vspace{-0.7cm}
    \caption{When using MultiNash-PF on a difficult two agent trajectory game, we first employ an implicit particle filter to obtain coarse estimates of GNEs.}
    \label{fig:filtering}
\end{figure}
\section{Identifying Local Nash Equilibria via Bayesian Inference}\label{sec: NKF}
In this section, we propose our approach to solve~\eqref{opt: traj local} from a Bayesian inference perspective. We reformulate the optimal control problem~\eqref{opt: traj local} into an equivalent Bayesian inference problem by utilizing the reference trajectory and constraint violations as virtual measurements that provide evidence for the optimal inference of the system trajectory in~\eqref{opt: traj local}. 
To this end, we first reformulate problem~\eqref{opt: traj local} by considering soft-constraints and introducing auxiliary variables $y_{x,t}$ and $y_{g,t}$ as:
\vspace{-0.3cm}
    \begin{align}\label{moving horizon estimation}
       \min_{\{v_t, w_t, \eta_t\}_{t=0}^\tau} & 
  \norm{v_\tau}_{Q_\tau}^2 + \sum_{t=0}^{\tau-1} \norm{v_t}_{Q}^2 + \sum_{t=0}^\tau \left( \norm{w_t}_{R}^2 + \norm{\eta_t}^2_{Q_\eta} \right), \nonumber \\  
    \mathrm{s.t} \quad & \begin{bmatrix}
        x_{t+1}\\
        u_{t+1}
    \end{bmatrix}=\begin{bmatrix}
        f(x_t, u_t)\\
        0_{m}
    \end{bmatrix}+\begin{bmatrix}
        0_{n}\\
        w_t
    \end{bmatrix}, \nonumber \\
    & \begin{bmatrix}
        y_{x,t}\\
         y_{g,t}
    \end{bmatrix}=\begin{bmatrix}
        x_t\\
        \psi\left(g(x_t, u_t)\right)
    \end{bmatrix}+\begin{bmatrix}
        v_t\\
        \eta_t
    \end{bmatrix}, \nonumber \\
    & \begin{bmatrix}
        x_0\\
        u_0
    \end{bmatrix} =\begin{bmatrix}
        \hat{x}_0\\
        w_{t-1}
    \end{bmatrix}, \quad 0 \leq t \leq \tau, 
\end{align}
where $v_t$, $w_t$, and $\eta_t$ are bounded disturbances. The weight matrix $Q_{\eta} \in \mathbb{R}^{n_c \times n_c} $ quantifies the slackness in  satisfying constraints. Further, the degree of constraint violation is captured by a barrier function $\psi(\cdot, \cdot)$, which is defined as
\begin{equation}\label{solftplus}
    \psi\left(g(x_t, u_t)\right) = \frac{1}{\alpha}\ln\left(1 + \exp\left(g\left(x_t, u_t\right)\right) \right),
\end{equation}
where $\alpha$ is a parameter that tunes the strictness of constraint enforcement. 

A direct attempt at solving~\eqref{moving horizon estimation} using traditional gradient-based methods poses a similar computational burden in identifying all of the local equilibria compared to problem~\eqref{opt: traj local}. However, the above connection between~\eqref{opt: traj local} and the moving horizon estimation problem~\eqref{moving horizon estimation} highlights the possibility to view the problem from a Bayesian inference perspective. To this end, we construct the following virtual state-space model from~\eqref{moving horizon estimation} as
\begin{align}\label{Virtual-System-Compact}
\bar{x}_{t+1} = \bar{f}(\bar{x}_t) + \bar{w}_t, \;
{\bar y}_t =\bar h(  \bar{x}_t ) +\bar v_t,
\end{align}
where 
\vspace{-0.3cm}
\begin{align*}
\bar{x}_t&=\begin{bmatrix}x_{t} \\ u_{t}\end{bmatrix}, \: \bar{y}_t= \begin{bmatrix} y_{x,t} \\ y_{g,t}\end{bmatrix}, \: \bar{w}_t=\begin{bmatrix} 0_n \\ w_t\end{bmatrix}, \\ \bar{v}_t =\begin{bmatrix}v_t \\ \eta_t\end{bmatrix}, \; \bar f(&\bar x_t)=\begin{bmatrix}f(x_t, u_t) \\ {0}_m\end{bmatrix},  \; \bar h(\bar x_t)=\begin{bmatrix}x_t \\ \psi(g\left( x_t, u_t\right))\end{bmatrix}, 
\end{align*}
where we relax $\bar w_t$ and $\bar v_t$ for $0\leq t \leq \tau$ to be stochastic disturbances modeled as Gaussian-distributed random variables with density
\begin{equation*}\label{noise pdfs}
    \bar v_t \sim \mathcal{N}(0_{n+c}, \bar Q), \; \bar v_{\tau} \sim \mathcal{N}(0_{n+c}, \bar Q_{\tau}), \; \bar w_t \sim \mathcal{N}(0_{n+m}, \bar R),
\end{equation*} 
with $\bar Q \coloneqq \blkdiag(Q^{-1}, Q_\eta^{-1})$, $\bar Q_\tau \coloneqq \blkdiag(Q_{\tau}^{-1}, Q_\eta^{-1})$, $\bar R \coloneqq  \blkdiag(0_{n\times n}, R^{-1})$. The state estimation relevant to~\eqref{Virtual-System-Compact} is to estimate the state $\bar x_t$ given the measurements $\bar y_t$. The value of the virtual measurements must be set in a way that steers the inference problem to have the same optima as in problem~\eqref{opt: traj local}. We achieve this by noting that $y_{x,t}$ is a measurement value used to correct the state estimate of $x_t$ towards the reference $\hat x_t$ while $y_{g,t}$ is to satisfy constraints through~\eqref{solftplus}. Observing that the barrier function in~\eqref{solftplus} outputs zero when the constraints are satisfied, we set the virtual measurement for the barrier function to be equal to $0_{c}$. Hence, the virtual measurement that will drive $\bar x_t$ towards optimality takes the value
\vspace{-0.1cm}
\begin{equation*}
    \bar y_t = \begin{bmatrix}
         \hat x_t \\ {0}_c
    \end{bmatrix}, 0\leq t \leq \tau.
\end{equation*}

The Bayesian inference problem pertaining to~\eqref{Virtual-System-Compact} is one that characterize the conditional distribution $\p(\bar{x}_{0:\tau} | \bar{y}_{0:\tau})$. In the following lemma, we show that an equivalence holds between a posteriori estimate of $\p(\bar{x}_{0:\tau} | \bar{y}_{0:\tau})$ and~\eqref{opt: traj local}.
\begin{lemma}\label{MAP lemma}
Assuming independence between the noise vectors $\bar v_t$ and $\bar w_t$ in~\eqref{Virtual-System-Compact}, the problem in~\eqref{opt: traj local} has the same optima as
\vspace{-0.4cm}
\begin{align*}\label{MAP problem}
\bar x_{0:\tau}^{\star} = \arg \max_{\bar{x}_{0:\tau}} \: \log\p(\bar x_{0:\tau} | \bar{y}_{0\tau}). 
\end{align*}
\end{lemma}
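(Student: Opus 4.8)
The plan is to expand the log-posterior $\log p(\bar x_{0:\tau}\mid\bar y_{0:\tau})$ using the hidden-Markov structure of the virtual state-space model~\eqref{Virtual-System-Compact}, and to show that maximizing it over $\bar x_{0:\tau}$ reduces, after discarding terms that do not depend on the trajectory, to the soft-constrained problem~\eqref{moving horizon estimation}, which by construction is the reformulation of~\eqref{opt: traj local}.

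First I would use Bayes' rule, $p(\bar x_{0:\tau}\mid\bar y_{0:\tau})\propto p(\bar x_{0:\tau},\bar y_{0:\tau})$, the factor $p(\bar y_{0:\tau})$ being constant in $\bar x_{0:\tau}$. The Markov property of~\eqref{Virtual-System-Compact} together with the assumed independence of the noise sequences $\{\bar w_t\}$ and $\{\bar v_t\}$ yields the factorization $p(\bar x_0)\prod_{t=1}^{\tau}p(\bar x_t\mid\bar x_{t-1})\prod_{t=0}^{\tau}p(\bar y_t\mid\bar x_t)$, so $\log p(\bar x_{0:\tau}\mid\bar y_{0:\tau})$ splits into a prior term, transition terms, measurement terms, and an additive constant.

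Next I would substitute the Gaussian densities. The measurement term is immediate because $\bar v_t\sim\mathcal{N}(0,\bar Q)$ is nondegenerate: with $\bar y_t$ equal to $\hat x_t$ stacked on $0_c$ and $\bar h$ as in~\eqref{Virtual-System-Compact}, one gets $-2\log p(\bar y_t\mid\bar x_t)=\norm{x_t-\hat x_t}_{Q}^2+\norm{\psi(g(x_t,u_t))}_{Q_\eta}^2$ up to a constant for $t<\tau$, and the same expression with $Q_\tau$ at $t=\tau$, where $\psi$ is the barrier~\eqref{solftplus}. The transition term is the delicate step, since $\bar w_t\sim\mathcal{N}(0,\bar R)$ with $\bar R=\blkdiag(0_{n\times n},R^{-1})$ is a degenerate Gaussian: I plan to interpret $p(\bar x_t\mid\bar x_{t-1})$ through the regularization $\bar R_\varepsilon=\blkdiag(\varepsilon I_n,R^{-1})$, for which $-2\log p_\varepsilon(\bar x_t\mid\bar x_{t-1})=\tfrac{1}{\varepsilon}\norm{x_t-f(x_{t-1},u_{t-1})}_2^2+\norm{u_t}_{R}^2$ up to a constant. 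Letting $\varepsilon\downarrow 0$, the additive constant $n\log(2\pi\varepsilon)$ does not depend on $\bar x_{0:\tau}$, the first term drives any maximizer onto the manifold $x_t=f(x_{t-1},u_{t-1})$, and the control penalty $\norm{u_t}_{R}^2$ survives; the prior $p(\bar x_0)$ is handled identically, pinning $x_0=\hat x_0$ and contributing $\norm{u_0}_{R}^2$.

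Collecting the surviving terms and negating, $\arg\max_{\bar x_{0:\tau}}\log p(\bar x_{0:\tau}\mid\bar y_{0:\tau})$ becomes the minimization of $\norm{x_\tau-\hat x_\tau}_{Q_\tau}^2+\sum_{t=0}^{\tau-1}\norm{x_t-\hat x_t}_{Q}^2+\sum_{t=0}^{\tau}\left(\norm{u_t}_{R}^2+\norm{\psi(g(x_t,u_t))}_{Q_\eta}^2\right)$ over trajectories with $x_0=\hat x_0$ and $x_{t+1}=f(x_t,u_t)$, which is precisely~\eqref{moving horizon estimation} under the identification $v_t=\hat x_t-x_t$ and $\eta_t=-\psi(g(x_t,u_t))$, hence the soft-constraint surrogate of~\eqref{opt: traj local}. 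The main obstacle is the rigorous handling of the singular transition covariance $\bar R$, i.e.\ justifying that MAP estimation under the deterministic dynamics is the $\varepsilon\downarrow 0$ limit of the nondegenerate problems; a secondary point worth stating explicitly is that ``the same optima as~\eqref{opt: traj local}'' is meant through this soft-constraint reformulation, which coincides with~\eqref{opt: traj local} exactly in the strict-barrier limit ($\alpha\to\infty$ jointly with $Q_\eta\to\infty$).
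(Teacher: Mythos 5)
Your argument is correct in substance, and it is worth noting that the paper itself provides no derivation to compare against: its entire ``proof'' of Lemma~\ref{MAP lemma} is a deferral to \cite[Thm.~1]{askari2023model}. Your route --- Bayes' rule, the Markov/noise-independence factorization $p(\bar x_0)\prod_{t}\p(\bar x_t|\bar x_{t-1})\prod_{t}\p(\bar y_t|\bar x_t)$, substitution of the Gaussian densities with $\bar Q=\blkdiag(Q^{-1},Q_\eta^{-1})$ and $\bar R=\blkdiag(0_{n\times n},R^{-1})$, and collection of the surviving negative-log terms --- is the standard MAP-to-optimal-control equivalence and is precisely what the cited theorem formalizes, so you have in effect supplied the self-contained proof the paper omits. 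Two of your side remarks deserve emphasis because they go beyond what the lemma states. First, the transition covariance $\bar R$ is singular, so $\p(\bar x_t|\bar x_{t-1})$ admits no density with respect to Lebesgue measure on $\mathbb{R}^{n+m}$; your $\varepsilon$-regularization (equivalently, working with the density on the affine support $\{x_t=f(x_{t-1},u_{t-1})\}$, which converts the dynamics into a hard constraint of the MAP problem) is the right repair, and the diverging normalization $\tfrac{n}{2}\log(2\pi\varepsilon)$ is indeed harmless since it is trajectory-independent for each fixed $\varepsilon$. Second, you are right that the MAP problem is literally equivalent to the soft-constrained surrogate~\eqref{moving horizon estimation}, not to~\eqref{opt: traj local} itself: the inequality constraint $g\le 0_c$ enters only through the penalty $\|\psi(g(x_t,u_t))\|^2_{Q_\eta}$ with the barrier~\eqref{solftplus}, so ``same optima as~\eqref{opt: traj local}'' is exact only in the strict-enforcement limit of $\alpha$ and $Q_\eta$. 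The lemma glosses over both points; your proof makes them explicit, which is a strength rather than a gap, though a fully rigorous version should state the mild coercivity conditions under which the $\varepsilon\downarrow 0$ maximizers converge to the constrained one.
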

\noindent We refer the interested reader to \cite[Thm. 1]{askari2023model} for the proof. The above lemma implies that the local GNEs in~\eqref{opt: traj local} translate to multiple local maxima of \(\p(\bar x_{0:\tau} | \bar{y}_{0:\tau})\). Hence, the Bayesian inference problem for~\eqref{Virtual-System-Compact} involves characterizing the multi-modal PDF \(\p(\bar x_{0:\tau} | \bar{y}_{0:\tau})\). Generally, state estimation of a multi-modal PDF does not admit closed-form solutions~\cite{Sarkka:Cambridge:2013}. Hence, we resort to approximate solutions. A powerful approach is to use Monte Carlo methods that empirically approximate the distribution via a set of $J$ particles as 
\vspace{-0.4cm}
\begin{equation}\label{MonteCarlo Approx}
   \p(\bar x_{0:\tau} | \bar y_{0:\tau}) \approx \sum_{j = 1}^{J}w_{\tau}^j\delta\left(x_{0:\tau} - x_{0:\tau}^j\right),
\end{equation}
\vspace{-0.3cm}

\noindent where $\delta(\cdot)$ is the Dirac delta function and $w_{\tau}^i$ is the weight assigned to a sample trajectory $\bar{x}_{0:\tau}^i$ for $j = 1,\hdots, J$. \maulik{It is important to note that, in our setting, each particle represents a complete trajectory.}

\begin{figure*}[t]
    \centering
    \includegraphics[width=\linewidth]{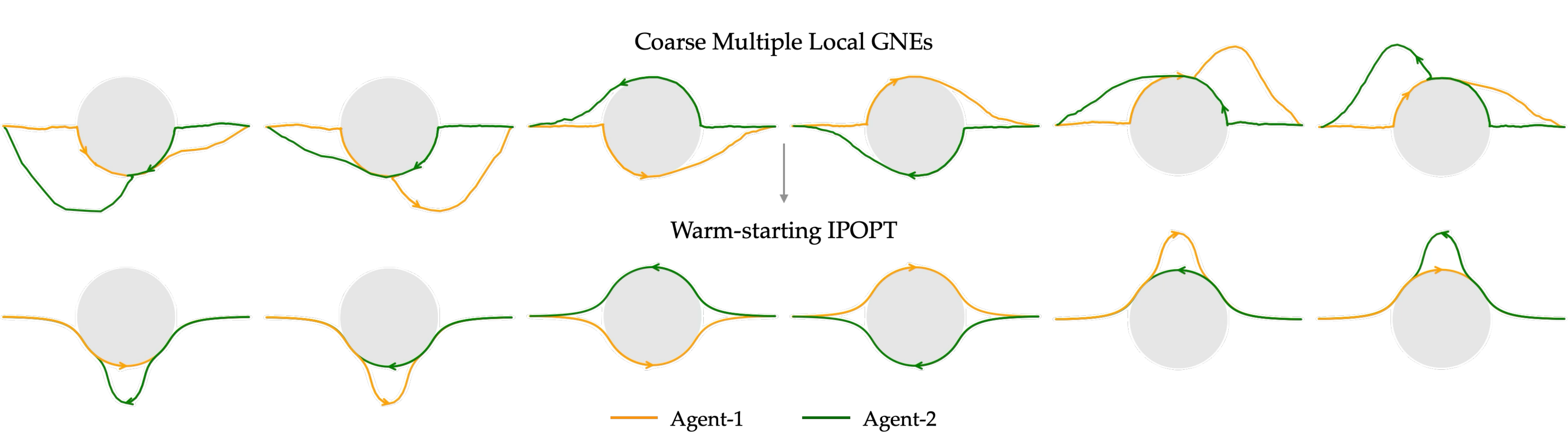}
    \caption{MultiNash-PF identifies all the local GNE trajectories for a challenging two-agent trajectory game where two agents swapped positions while avoiding an obstacle. In this scenario, the obstacle radius is greater than the collision avoidance radius of the two agents, leading to the discovery of 6 distinct Nash equilibria. MultiNash-PF effectively obtains coarse estimates of local GNEs through implicit particle filtering, which are then refined using IPOPT to obtain local GNEs.}
    \label{fig:experiment-2}
    \vspace{-0.6cm}
\end{figure*}

To achieve the approximation in~\eqref{MonteCarlo Approx}, we employ the implicit particle filter in~\cite{Askari:2022:Auto} due to the twofold benefit it holds. First, as suggested in Lemma~\ref{MAP lemma}, the local maxima (i.e., high-probability regions) of \(\p(\bar x_{0:\tau} | \bar y_{0:\tau})\) correspond to the local GNEs. Hence, focusing our sampling efforts towards these high-probability regions can reduce the number of required samples for the approximation in~\eqref{MonteCarlo Approx}. This approach aligns with the principle of implicit importance sampling, whose objective is to build a mapping that draws particles from the high-probability regions of the filtering distribution in~\eqref{MonteCarlo Approx}. The implicit particle filter mitigates the particle degeneracy problem that pertains to particle filters by focusing the samples on the high-probability regions, which results in samples with large enough weights that are less prone to degeneration. Therefore, the implicit particle filter can often be deployed without requiring a resampling step~\cite{Askari:2022:Auto}, thereby resulting in smoother approximated trajectories, although occasional resampling may still be required.

We leverage the Markovian property of the virtual system~\eqref{Virtual-System-Compact} to recursively approximate~\eqref{MonteCarlo Approx} using Bayes' rule:
\vspace{-0.1cm}
\begin{equation}\label{Bayes Recursion}
    \p(\bar{x}_{0:t} | \bar{y}_{0:t}) = \p(\bar{y}_t | \bar{x}_t)\p(\bar {x}_t | \bar{x}_{t-1})\p(\bar{x}_{0:t-1} | \bar{y}_{0:t-1}),
\end{equation}
for $0 \leq t \leq \tau$. This indicates that given the particle $\bar{x}_{t-1}^j$ from the prior distribution $\p(\bar{x}_{0:t-1} | \bar{y}_{0:t-1})$, we can obtain $\bar{x}_t^j$. As shown in~\cite{Askari:2022:Auto}, we use implicit importance sampling and identify an implicit mapping to sample from high probability regions of~\eqref{Bayes Recursion}. The implicit map is constructed by considering sampling from a reference random vector $\xi_t$ with density $p(\gamma_t)$ and mapping it to high probability regions of $\p(x_{0:\tau} | y_{0:\tau})$, such that a sample $\gamma^i_t$ is mapped to $x_t^i$. The mapping connects the highly probable regions of $p(\xi)$ to $\p(x_{0:\tau} | y_{0:\tau})$ by letting 
\begin{equation}\label{implicit mapping}
F(x_t^i) - \min F(x_t^i) = G(\gamma_t^i) - \min G(\gamma_t^i),
\end{equation}
where $F(x_t^i) = -\log\left( \p(x_t^i | y_{1:t})\right)$ and $G(\gamma_t^i) = -\log\left( p(\gamma_t^i)\right)$. Finding an explicit solution to~\eqref{implicit mapping} is computationally intractable due to the involved nonconvex optimization. To bypass this computational challenge, we employ a local Gaussian approximation around $\bar{x}_{t-1}^j$ such that 
\begin{equation*}\label{local Gaussian Approx}
 p(\bar{x}_t^j| \bar{y}_{0:t})\sim \mathcal{N}\left(\hat m_t^j, \hat \Sigma_t^{x,j}\right),
\end{equation*}
\vspace{-0.4cm}

\noindent where $\hat{m}_t^j$ and $\hat \Sigma_t^{x,j}$ are the mean and covariance of $\bar{x}_t^j$, respectively. These quantities can be approximated recursively using an Unscented Kalman filter (UKF)~\cite{wan2000unscented} for all $j = 1,\ldots, N$. Then, the implicit map can be computed in closed form as
\vspace{-0.3cm}
\begin{equation*}
    \bar x_t^j = \hat m_t^j + \sqrt{\hat \Sigma_t^{x,j}}\gamma_t^j,
\end{equation*}
\vspace{-0.6cm}

\noindent where $\gamma_t^j$ is a sample from $p(\gamma_t)\sim \mathcal{N}(0, I_n)$ and $I_n$ denotes the identity matrix of size $n$.

We repeat this process until $t= \tau$ for each particle to obtain the set of trajectories $\{\bar{x}_{0:\tau}\}_{j=1}^J$, which will include the coarse estimates of local GNEs. This completes the unscented implicit particle filter, which identifies the coarse estimates of multiple local Nash equilibria in~\eqref{opt: traj local}.

In the last step, we extract local equilibria from $\{x_{0:\tau}\}_{j=1}^J$. As viewed in Fig.~\ref{fig:filtering}, the particle set contains multiple trajectories for each GNE. Therefore, to isolate the GNE trajectories, we use a clustering method to aggregate the particles close to each other in one cluster. We leverage methods such as hierarchical clustering~\cite{murtagh2012algorithms} or DBSCAN~\cite{ester1996density} for clustering purposes. It should be noted that clustering methods require a metric to compute distances between the data points. We use Fréchet distance~\cite{eiter1994computing} as our metric to compute distances between the trajectories. The clustering of trajectories will result in one cluster for each GNE.
For each cluster, we compute the average trajectory which serves as an initial guess for the IPOPT solver. The solver then refines this initial guess by minimizing \eqref{opt: traj local}, ultimately converging to a precise local GNE. 
The algorithmic steps of the proposed \emph{MultiNash-PF} are provided in Algorithm~\ref{alg: filter}.
\begin{algorithm}[h]
\caption{MultiNash-PF: Multiple local GNEs based on particle filter}
\label{alg: filter}
\begin{algorithmic}[1]
\State Formulate the constrained dynamic potential game~\eqref{opt: traj local}
\State Setup the virtual model~\eqref{Virtual-System-Compact}
\Statex Perform unscented implicit particle filtering as follows:
\State \parbox[t]{\dimexpr\linewidth-\algorithmicindent}{ 
\hspace{1em}Initialize the particles: $\bar{x}_0^j$, $\hat \Sigma^{x,j}_0$, and $w_0^j$ for 
\Statex \hspace{1em}  $j = 1,\hdots, J$}
\vspace{0.3em}
\State \hspace{1em} Employ Algorithm-1 from \cite{Askari:2022:Auto} combined with UKF 
\Statex \hspace{1em} to obtain coarse solutions, \(\{ x_{0:\tau}\}_{j=1}^J\).
\State Extract trajectories through clustering from \(\{ x_{0:\tau}\}_{j=1}^J\) and warm start IPOPT solver to obtain the local GNEs \(\{{x}_{0:\tau}^{i^\star},{u}_{0:\tau}^{i^\star}\}_{i=1}^N\) of $\mathcal{G}$.
\end{algorithmic}
\end{algorithm}
\section{Numerical Simulations}\label{sec:sim}


To showcase the capabilities of our method, we take motivation from real-life highway scenarios. We first consider an interaction between two agents moving toward each other as shown in Fig~\ref{fig: two_unicycle}. We consider the following discrete-time unicycle dynamics to model each agent $i \in \{1,2\}$:
\vspace{-0.2cm}
\begin{align*}\label{unicycle}
    p^i_{t+1} & = p^i_t + \Delta t\cdot \nu^i_t\cos{\theta^i_t}, \;
    q^i_{t+1} = q^i_t + \Delta t\cdot \nu^i_t\sin{\theta^i_t} \nonumber \\
    \theta^{i}_{t+1} & = \theta^i_t + \Delta t\cdot\omega_i, \; \nu^i_{t+1} = \nu^i_{t} + \Delta \nu^i_{t}, \; \omega^i_{t+1} = \omega^i_{t} + \Delta \omega^i_{t},
\end{align*}
\vspace{-0.5cm}

\noindent where $\Delta t$ is the time-step size, $p^i_t$ and $q^i_t$ are $x$ and $y$ coordinates of the positions, $\theta^i_t$ is the heading angle from positive x-axis, $\nu^i_t$ is the linear velocity, and $\omega^i_t$ is the angular velocity of the agent $i$ at time $t$. Further, $\Delta \nu^i_t$ is the change in linear velocity, and $\Delta\omega^i_t$ is the change in angular velocity of the agent $i$ at time $t$, respectively. For each agent $i$, we denote the state vector $x^i_t$ and the action vector $u^i_t$ as follows
\vspace{-0.3cm}
\begin{equation*}
    x^i_t=\begin{bmatrix}
        p^i_t &
        q^i_t &
        \theta^i_t &
        \nu^i_t & 
        \omega^i_t 
    \end{bmatrix}^\top, \enskip u^i_t = \begin{bmatrix}
        \Delta \nu^i_t &
        \Delta \omega^i_t
    \end{bmatrix}^\top.
\end{equation*}
\vspace{-0.5cm}

\noindent We formulate this scenario as a two-player game according to cost functions as described in \eqref{eqn: traj cost}. The reference trajectory for both agents is given as a straight line on the highway, starting from their initial position to their respective goal location. With $\hat{Q} = \blkdiag(50,10,5,5,2)$, the reference tracking and control effort cost matrices are set as 
\vspace{-0.1cm}
\begin{align}\label{cost}
    Q^i = 0.6\hat{Q}, \;
    Q^i_\tau = 100\hat{Q}, \;
    R^i = diag(8,4).
\end{align}
\noindent for $i\in\{1,2\}$. In addition, the agents are subject to the constraint of avoiding collision with each other, which we define as
\vspace{-0.3cm}
\begin{align}\label{collision_constraint}
     g_c(x_t,u_t) \coloneq -d(x^1_t,x^2_t) + r_{\text{col}} \leq 0,
\end{align}
where $d(x^1_t,x^2_t)$ is the Euclidean distance between the two agents and $r_{\text{col}} \coloneq 3m$ is the collision avoidance radius between the two agents. 

With this setup, we employ MultiNash-PF from Algorithm~\ref{alg: filter} to efficiently capture the multi-modality in the solution space and then extract the solutions using hierarchical clustering~\cite{murtagh2012algorithms} to provide a warm start trajectory to IPOPT~\cite{Wachter:2006wt} solver to recover different local GNEs. We use the Julia programming language and utilize the JuMP~\cite{dunning2017jump} library to solve the game using the IPOPT solver. The results of our method are provided in Fig~\ref{fig: two_unicycle}. 
As we can see, the presence of two agents and collision avoidance constraints induces two different possible GNEs. Either both agents will yield to their right, or they will yield to their left to avoid collision. As shown in Fig~\ref{fig: two_unicycle}, our method can recover both equilibria.

\begin{figure}[t]
    \centering
    \includegraphics[width=\linewidth]{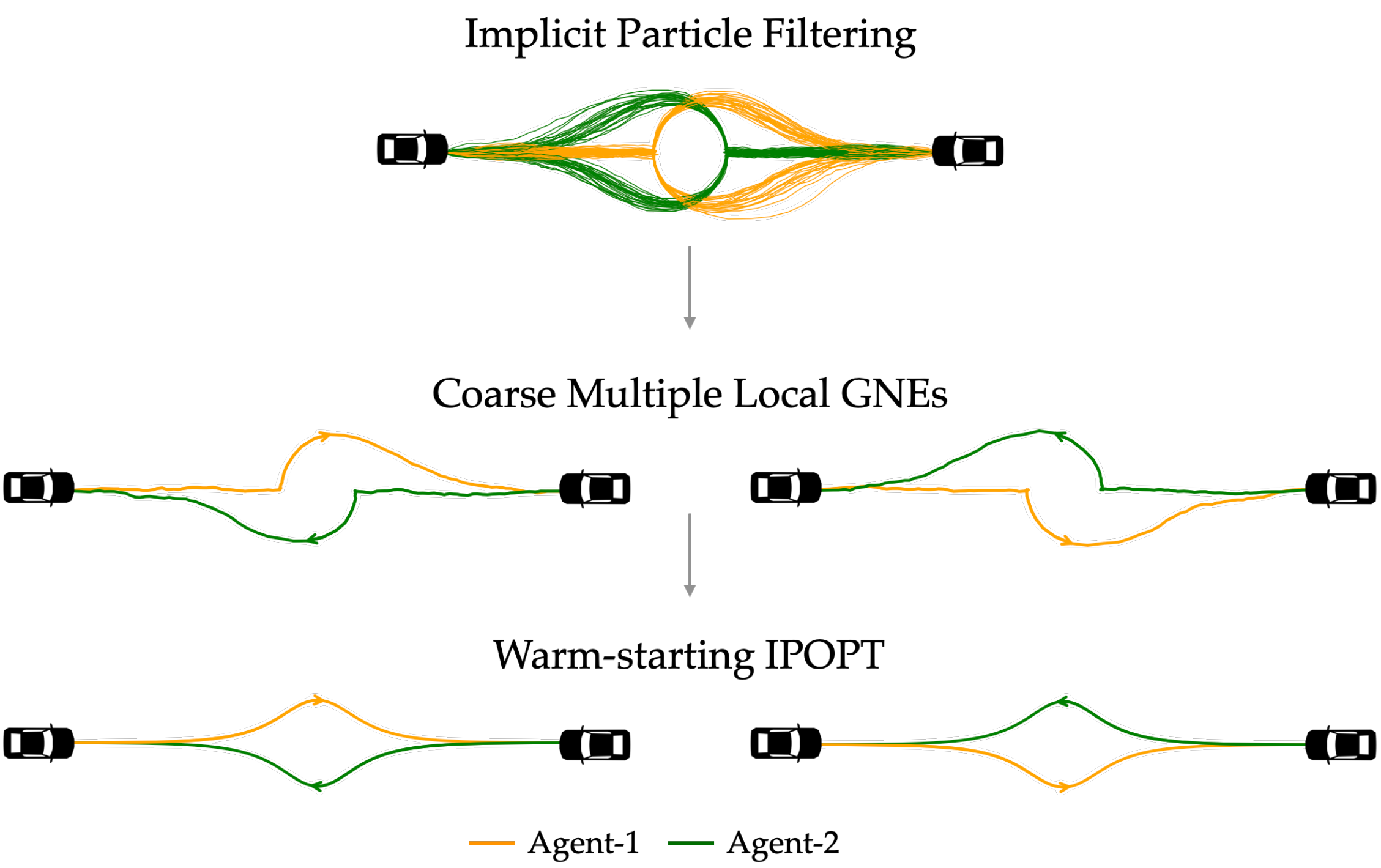}
    \caption{The Nash equilibrium trajectories found by \emph{MultiNash-PF} when two unicycle agents change their positions. First, we use the implicit particle filter to discover the coarse estimates of two different equilibria, and then utilize them as a warm start for IPOPT to obtain the two Nash equilibrium trajectories.}
    \label{fig: two_unicycle}
    \vspace{-0.7cm}
\end{figure}



Next, we consider a more complicated scenario in which two agents aim to exchange their positions while avoiding an obstacle of radius $r_{\text{obs}} = 4m$ located at the midpoint of the two agents. Note that the radius of the obstacle is larger than the collision avoidance radius of the two agents. 

We consider the same setup as before. In this example, in addition to \eqref{collision_constraint}, the constraints for this game will include both agents avoiding the obstacle $g^i_o(x_t,u_t):= \begin{bmatrix}
    -d(x^i_t,x_{\text{obs}}) + r_{\text{obs}}
\end{bmatrix}$ for $i=1,2$ where $x_{\text{obs}}$ is the location of the obstacle. We also assume that we have constraints on the control bounds $g_b(x_t,u_t) = \begin{bmatrix}
    -\nu^1_t, &
        -\nu^2_t, &
        |u^1_t| - u^1_b, &
        |u^2_t| - u^2_b
\end{bmatrix}$, where $u^1_b$ and $u^2_b$ are controlled action bounds for agents 1 and 2, respectively. We write the combined constraints as $g(x_t,u_t) := \begin{bmatrix}
        g_c & g^1_o & g^2_o & g_b
    \end{bmatrix}^\top \leq 0.$
We choose the values of control input bounds to be $u^1_b = u^2_b = [0.15,0.75]^\top$.
This setup induces multitudes of local equilibria.
Two trivial local equilibria are when the two agents pass each other through opposite sides of the obstacle. However, there exist four other nontrivial equilibria when both agents can go on the same side of the obstacle, and one of the agents can yield to the other one. It is challenging to recover all six equilibria without utilizing the capabilities of the implicit particle filter. \maulik{We fixed $J = 50$ particles and obtained the solution trajectories using implicit particle filtering from Algorithm~\ref{alg: filter} described in Section~\ref{sec: NKF}.} Upon performing simulation experiments with our method, we observe that MultiNash-PF can recover all six Nash equilibria as shown in Fig.~\ref{fig:filtering} and Fig.~\ref{fig:experiment-2}. Please note that, while trajectories may overlap spatially, collisions are avoided because agents traverse those regions at different times, as enforced by the constraints in our optimization.

\begin{figure}
    \centering
    \begin{subfigure}{\linewidth}
        \centering
        \includegraphics[height=2in]{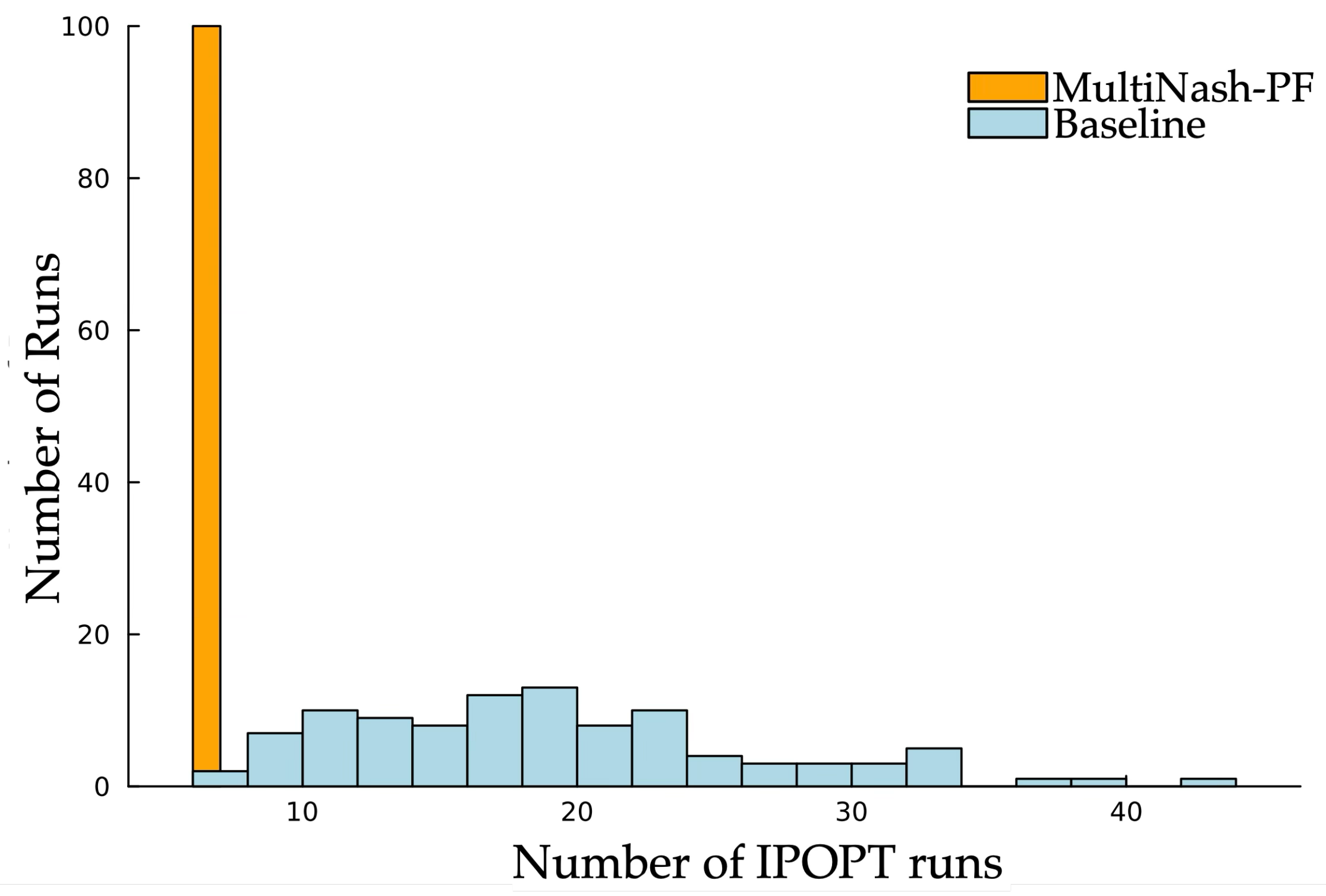}
        \caption{Histogram of the number of IPOPT runs required to obtain all six equilibria.}
        \label{fig:ipopt_runs}
    \end{subfigure}
    \vspace{0cm} 
    \begin{subfigure}{\linewidth}
        \centering
        \includegraphics[height=2in]{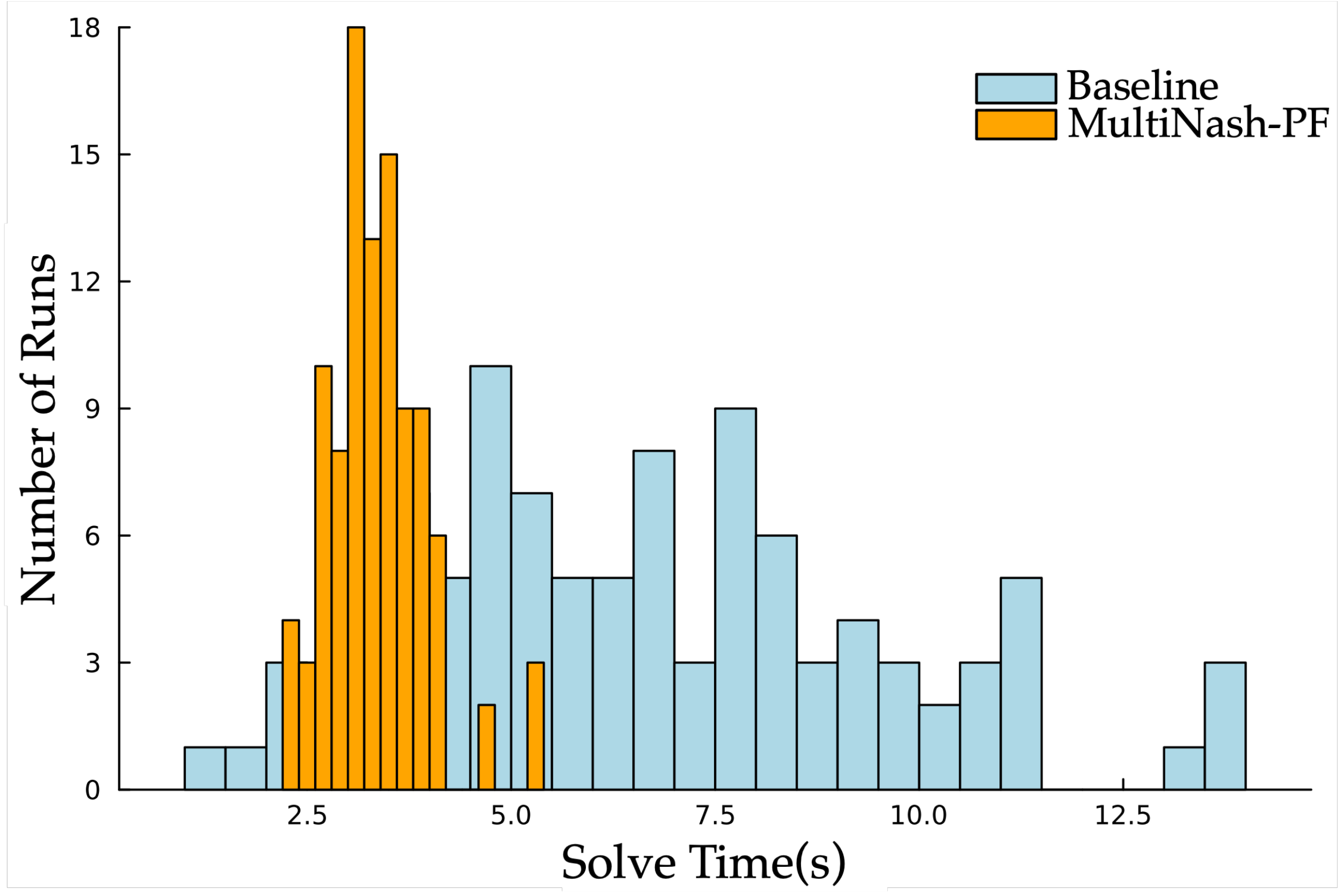}
        \caption{Histogram of the solve time comparison of MultiNash-PF and the baseline.}
        \label{fig:solve_time}
    \end{subfigure}
    \vspace{-0.6cm}
    
    \caption{Comparison of MultiNash-PF with the baseline for the experiment shown in Fig.~\ref{fig:experiment-2}. \\ 
        (a) For the baseline, the average number of IPOPT runs required to obtain all six different equilibria is $18.64 \pm 7.57$, while for MultiNash-PF, this is always $6$. MultiNash-PF requires 3 times fewer IPOPT runs.
        (b) MultiNash-PF reduces the solve time by about $50\%$ compared to the baseline while giving a much lower variance on the solve time.}
        \vspace{-0.7cm}
\end{figure}

\subsubsection{Comparison with Baseline}
In order to compare the effectiveness of our method, we consider a baseline method. For the baseline, we directly initialize the IPOPT solver with random perturbations of reference trajectories and compute the resulting optimal solutions. 
The computation time of our method is a combination of the time used in implicit particle filtering and the time used by IPOPT to obtain all the different equilibria given warm starting from filtering. 

For the baseline, we randomly initialize the IPOPT solver until we recover all 6 different equilibria. We repeat this experiment for 100 different Monte Carlo runs. We observe that it requires, on average, $18.64 \pm 7.57$ random initializations of IPOPT to obtain all six different equilibria using the baseline approach. However, with our method, the number of IPOPT runs required is always six because we always obtain all six modes of equilibria from implicit particle filtering. A histogram of the number of IPOPT runs required for obtaining all six equilibria is plotted in Fig.~\ref{fig:ipopt_runs}.

Furthermore, the solve time to recover all 6 different Nash equilibria for the baseline is $6.71 \pm 2.83 s$, while for MultiNash-PF, the solve time is $3.36 \pm 0.59 s$. Our method results in a solve-time reduction of $50\%$ compared to the baseline while giving a much lower variance on the solve time. A histogram of the solve times of the baseline compared to our method is plotted in Fig.~\ref{fig:solve_time}. As explained earlier, our method's solve time is a combination of the time used in implicit particle filtering and the time used by IPOPT. We observe that the time required for implicit particle filtering over 100 Monte Carlo runs is $0.37 \pm 0.072 s$ while the time used by IPOPT to obtain the final solutions is $2.99 \pm 0.57 s$. Therefore, in our MultiNash-PF, the time used by implicit particle filtering is $11\%$ of the total solve time, which shows that filtering is very efficient in exploring the solution space of local Nash equilibria.

\section{Hardware Experiment}

Finally, we conduct a hardware experiment where a human and a robot exchange their positions as shown in Fig.~\ref{fig:experiment}. We perform these experiments using a TurtleBot 4, which is a ground robot with unicycle dynamics. We assume that the human also follows unicycle dynamics. The collision avoidance constraint between the TurtleBot and the human is assumed to be the same as that of \eqref{collision_constraint}. We use the Vicon motion capture system to measure the current position of both the robot and the human.

Similar as shown in Fig.~\ref{fig: two_unicycle}, two GNEs may exist, which will correspond to two different modes of interactions. Namely, \emph{yield-right equilibrium} when the human and robot yield to their right and \emph{yield-left equilibrium} when they yield to their left. For the robot to safely navigate around the human without collision, the robot needs to be aware of both the GNEs and needs to be able to determine which GNE the human chooses. Using MultiNash-PF, the robot first uses an implicit particle filter to precompute the coarse estimates of the two different equilibria and then utilizes them as a warm start for IPOPT to obtain two modes of interaction. We assume that the interaction happens over $T = 10s$. Then, similar to \cite{bhatt2023efficient}, the robot uses a potential function based game-theoretic planner in a model-predictive fashion with a planning horizon of 5s and $\Delta t = 0.1s$ to plan its motion.

The robot faces two \maulik{precomputed} choices of GNEs to follow recovered from the MultiNash-PF. At any given time $t$ of planning, the robot computes the Fréchet distance~\cite{eiter1994computing} between the trajectory exhibited by the human so far and the human's trajectory in both the computed GNEs. Let the distances of the human trajectory from both the GNEs be denoted by $d_1$ and $d_2$, respectively. The robot keeps an uncertainty on the GNE that the human chooses until $|d_1-d_2| > d^*$ where $d^*$ is a pre-specified threshold. For example, in Fig.~\ref{fig:experiment}, at $t = 2$, the values of $d_1$ and $d_2$ are almost identical, which indicates that the human has not chosen their mode of interaction. Whenever $|d_1-d_2|>d^*$ occurs, the robot becomes certain of the mode of interaction followed by the human. Then, the robot follows the GNE corresponding to $\arg\min_{i\in\{1,2\}}\{d_i\}$. For example, in Fig.~\ref{fig:experiment}, at $t = 4$, the Fréchet distance of the human from its yield-left equilibrium trajectory is very high, while for the yield-left equilibrium, the Fréchet distance is low, indicating that the human has chosen the yield-right equilibrium. Using our method, the robot autonomously identifies this mode, moves accordingly, and reaches its goal location while successfully avoiding collision with the human.

We ran 10 different trials, and each time, we asked the human to either yield to their right or left randomly. Our algorithm was able to successfully compute both modes and identify the mode that the human was following. Using our algorithm, the robot was able to successfully avoid collision with the human in real-time and reach its goal location in all 10 runs.
\vspace{-0.3cm}
\section{Conclusion}

In this work, we introduced MultiNash-PF, a novel algorithm for efficiently computing multiple interaction modes. By leveraging potential game theory and implicit particle filtering, MultiNash-PF efficiently identifies coarse estimates of interaction modes, which are then refined using optimization solvers to obtain distinct interaction modes. Our numerical simulations demonstrate that MultiNash-PF significantly reduces computation time by up to 50\% compared to baseline methods, while effectively capturing the multimodal nature of multi-agent interactions. Furthermore, our real-world human-robot interaction experiments highlight the algorithm’s ability to reason about multiple interaction modes and resolve conflicts in real-time. 








\bibliographystyle{ieeetr}
\bibliography{references}

\begin{thebibliography}{10}

\bibitem{alterman2001convention}
R.~Alterman and A.~Garland, ``Convention in joint activity,'' {\em Cognitive Science}, vol.~25, no.~4, pp.~611--657, 2001.

\bibitem{peters2020inference}
L.~Peters, D.~Fridovich-Keil, C.~J. Tomlin, and Z.~N. Sunberg, ``Inference-based strategy alignment for general-sum differential games,'' {\em arXiv preprint arXiv:2002.04354[cs.RO]}, 2020.

\bibitem{im2024coordination}
J.~Im, Y.~Yu, D.~Fridovich-Keil, and U.~Topcu, ``Coordination in noncooperative multiplayer matrix games via reduced rank correlated equilibria,'' {\em arXiv preprint arXiv:2403.10384[cs.GT]}, 2024.

\bibitem{bacsar1998dynamic}
T.~Ba{\c{s}}ar and G.~J. Olsder, {\em Dynamic noncooperative game theory}.
\newblock SIAM, 1998.

\bibitem{galati2022game}
G.~Galati, S.~Primatesta, S.~Grammatico, S.~Macr{\`\i}, and A.~Rizzo, ``Game theoretical trajectory planning enhances social acceptability of robots by humans,'' {\em Scientific Reports}, vol.~12, no.~1, p.~21976, 2022.

\bibitem{fridovich2020efficient}
D.~Fridovich-Keil, E.~Ratner, L.~Peters, A.~D. Dragan, and C.~J. Tomlin, ``Efficient iterative linear-quadratic approximations for nonlinear multi-player general-sum differential games,'' in {\em 2020 IEEE International Conference on Robotics and Automation (ICRA)}, pp.~1475--1481, IEEE, 2020.

\bibitem{sadigh2016planning}
D.~Sadigh, S.~Sastry, S.~A. Seshia, and A.~D. Dragan, ``Planning for autonomous cars that leverage effects on human actions.,'' in {\em Robotics: Science and systems}, vol.~2, pp.~1--9, Ann Arbor, MI, USA, 2016.

\bibitem{bhatt2023strategic}
M.~Bhatt and N.~Mehr, ``Strategic decision-making in multi-agent domains: A weighted potential dynamic game approach,'' {\em arXiv preprint arXiv:2308.05876}, 2023.

\bibitem{dreves2018generalized}
A.~Dreves and M.~Gerdts, ``A generalized {Nash} equilibrium approach for optimal control problems of autonomous cars,'' {\em Optimal Control Applications and Methods}, vol.~39, no.~1, pp.~326--342, 2018.

\bibitem{wang2020game}
M.~Wang, N.~Mehr, A.~Gaidon, and M.~Schwager, ``Game-theoretic planning for risk-aware interactive agents,'' in {\em 2020 IEEE/RSJ International Conference on Intelligent Robots and Systems (IROS)}, pp.~6998--7005, IEEE, 2020.

\bibitem{bhatt2023efficient}
M.~Bhatt, Y.~Jia, and N.~Mehr, ``Efficient constrained multi-agent trajectory optimization using dynamic potential games,'' in {\em 2023 IEEE/RSJ International Conference on Intelligent Robots and Systems (IROS)}, pp.~7303--7310, IEEE, 2023.

\bibitem{jia2023rapid}
Y.~Jia, M.~Bhatt, and N.~Mehr, ``Rapid: Autonomous multi-agent racing using constrained potential dynamic games,'' in {\em 2023 European Control Conference (ECC)}, pp.~1--8, IEEE, 2023.

\bibitem{zazo2016dynamic}
S.~Zazo, S.~V. Macua, M.~S{\'a}nchez-Fern{\'a}ndez, and J.~Zazo, ``Dynamic potential games with constraints: Fundamentals and applications in communications,'' {\em IEEE Transactions on Signal Processing}, vol.~64, no.~14, pp.~3806--3821, 2016.

\bibitem{askari2023model}
I.~Askari, X.~Tu, S.~Zeng, and H.~Fang, ``Model predictive inferential control of neural state-space models for autonomous vehicle motion planning,'' {\em arXiv preprint arXiv:2310.08045}, 2023.

\bibitem{lavalle2000robot}
S.~M. LaValle, ``Robot motion planning: A game-theoretic foundation,'' {\em Algorithmica}, vol.~26, pp.~430--465, 2000.

\bibitem{liniger2019noncooperative}
A.~Liniger and J.~Lygeros, ``A noncooperative game approach to autonomous racing,'' {\em IEEE Transactions on Control Systems Technology}, vol.~28, no.~3, pp.~884--897, 2019.

\bibitem{spica2020real}
R.~Spica, E.~Cristofalo, Z.~Wang, E.~Montijano, and M.~Schwager, ``A real-time game theoretic planner for autonomous two-player drone racing,'' {\em IEEE Transactions on Robotics}, vol.~36, no.~5, pp.~1389--1403, 2020.

\bibitem{wang2019game}
M.~Wang, Z.~Wang, J.~Talbot, J.~C. Gerdes, and M.~Schwager, ``Game theoretic planning for self-driving cars in competitive scenarios.,'' in {\em Robotics: Science and Systems}, 2019.

\bibitem{mehr2023maximum}
N.~Mehr, M.~Wang, M.~Bhatt, and M.~Schwager, ``Maximum-entropy multi-agent dynamic games: Forward and inverse solutions,'' {\em IEEE transactions on robotics}, vol.~39, no.~3, pp.~1801--1815, 2023.

\bibitem{le2022algames}
S.~Le~Cleac’h, M.~Schwager, and Z.~Manchester, ``Algames: a fast augmented lagrangian solver for constrained dynamic games,'' {\em Autonomous Robots}, vol.~46, no.~1, pp.~201--215, 2022.

\bibitem{kavuncu2021potential}
T.~Kavuncu, A.~Yaraneri, and N.~Mehr, ``Potential {iLQR}: A potential-minimizing controller for planning multi-agent interactive trajectories,'' {\em arXiv preprint arXiv:2107.04926}, 2021.

\bibitem{williams2023distributed}
Z.~Williams, J.~Chen, and N.~Mehr, ``Distributed potential ilqr: Scalable game-theoretic trajectory planning for multi-agent interactions,'' in {\em 2023 IEEE International Conference on Robotics and Automation (ICRA)}, pp.~01--07, 2023.

\bibitem{bhatt2025strategic}
M.~Bhatt, Y.~Jia, and N.~Mehr, ``Strategic decision-making in multi-agent domains: A weighted constrained potential dynamic game approach,'' {\em IEEE Transactions on Robotics}, 2025.

\bibitem{Wachter:2006wt}
A.~W{\"a}chter and L.~T. Biegler, ``On the implementation of an interior-point filter line-search algorithm for large-scale nonlinear programming,'' {\em Mathematical Programming}, vol.~106, no.~1, pp.~25--57, 2006.

\bibitem{boggs1995sequential}
P.~T. Boggs and J.~W. Tolle, ``Sequential quadratic programming,'' {\em Acta numerica}, vol.~4, pp.~1--51, 1995.

\bibitem{Askari:TRO:2025}
I.~Askari, A.~Vaziri, X.~Tu, S.~Zeng, and H.~Fang, ``Model predictive inferential control of neural state-space models for autonomous vehicle motion planning,'' {\em IEEE Transactions on Robotics}, vol.~41, pp.~3202--3222, 2025.

\bibitem{askari2021nonlinear}
I.~Askari, S.~Zeng, and H.~Fang, ``Nonlinear model predictive control based on constraint-aware particle filtering/smoothing,'' in {\em 2021 American Control Conference (ACC)}, pp.~3532--3537, IEEE, 2021.

\bibitem{askari2022sampling}
I.~Askari, B.~Badnava, T.~Woodruff, S.~Zeng, and H.~Fang, ``Sampling-based nonlinear {MPC} of neural network dynamics with application to autonomous vehicle motion planning,'' in {\em 2022 American Control Conference (ACC)}, pp.~2084--2090, IEEE, 2022.

\bibitem{doucet2001sequential}
A.~Doucet, N.~De~Freitas, N.~J. Gordon, {\em et~al.}, {\em Sequential Monte Carlo methods in practice}, vol.~1.
\newblock Springer, 2001.

\bibitem{sarkka2023bayesian}
S.~S{\"a}rkk{\"a} and L.~Svensson, {\em {Bayesian} filtering and smoothing}, vol.~17.
\newblock Cambridge university press, 2023.

\bibitem{Askari:2022:Auto}
I.~Askari, M.~A. Haile, X.~Tu, and H.~Fang, ``Implicit particle filtering via a bank of nonlinear {Kalman} filters,'' {\em Automatica}, vol.~145, p.~110469, 2022.

\bibitem{askari2025model}
I.~Askari, A.~Vaziri, X.~Tu, S.~Zeng, and H.~Fang, ``Model predictive inferential control of neural state-space models for autonomous vehicle motion planning,'' {\em IEEE Transactions on Robotics}, 2025.

\bibitem{daskalakis2009complexity}
C.~Daskalakis, P.~W. Goldberg, and C.~H. Papadimitriou, ``The complexity of computing a {Nash} equilibrium,'' {\em Communications of the ACM}, vol.~52, no.~2, pp.~89--97, 2009.

\bibitem{Sarkka:Cambridge:2013}
S.~Särkkä, {\em Bayesian Filtering and Smoothing}.
\newblock Institute of Mathematical Statistics Textbooks, 2013.

\bibitem{wan2000unscented}
E.~A. Wan and R.~Van Der~Merwe, ``The unscented kalman filter for nonlinear estimation,'' in {\em Proceedings of the IEEE 2000 adaptive systems for signal processing, communications, and control symposium (Cat. No. 00EX373)}, pp.~153--158, Ieee, 2000.

\bibitem{murtagh2012algorithms}
F.~Murtagh and P.~Contreras, ``Algorithms for hierarchical clustering: an overview,'' {\em Wiley Interdisciplinary Reviews: Data Mining and Knowledge Discovery}, vol.~2, no.~1, pp.~86--97, 2012.

\bibitem{ester1996density}
M.~Ester, H.-P. Kriegel, J.~Sander, X.~Xu, {\em et~al.}, ``A density-based algorithm for discovering clusters in large spatial databases with noise,'' in {\em kdd}, vol.~96, pp.~226--231, 1996.

\bibitem{eiter1994computing}
T.~Eiter and H.~Mannila, ``Computing discrete fr{\'e}chet distance,'' 1994.

\bibitem{dunning2017jump}
I.~Dunning, J.~Huchette, and M.~Lubin, ``Jump: A modeling language for mathematical optimization,'' {\em SIAM review}, vol.~59, no.~2, pp.~295--320, 2017.

\end{thebibliography}

\end{document}